\documentclass[]{style}

\usepackage{amssymb}
\usepackage{xcolor}
\usepackage{tcolorbox}
\usepackage[utf8]{inputenc}
\usepackage[T1]{fontenc}
\usepackage{lmodern}
\usepackage{tcolorbox}
\usepackage{array}
\usepackage{tikz}
\usepackage{xcolor}

\definecolor{txcolor}{RGB}{27, 94, 32}   
\definecolor{rxpim}{RGB}{24, 95, 165}
\definecolor{outpim}{RGB}{127, 119, 221}
\definecolor{rxshade}{RGB}{59, 139, 212}
\definecolor{txshade}{RGB}{27, 94, 32} 

\usepackage{array}
\usepackage{makecell}
\usepackage{soul}

\usepackage{graphicx}
\usepackage{tcolorbox}
\usepackage{fontawesome5}
\usepackage{xcolor}
\usepackage{booktabs}
\usepackage{multirow}
\usepackage{algpseudocode}
\usepackage{listings}
\usepackage{enumitem}
\usepackage{caption}
\usepackage{mdframed}
\usepackage{tcolorbox}
\usepackage{algorithm}
\usepackage{algpseudocode}  
\usepackage{amsmath}        
\usepackage{tabularx}
\usepackage{algpseudocode}
\usepackage{hyperref}
\usepackage{amsthm}
\usepackage{orcidlink}

\usepackage{amsmath,amsfonts,bm}

\def\eqref#1{equation~\ref{#1}}

\def\1{\bm{1}}

\DeclareMathAlphabet{\mathsfit}{\encodingdefault}{\sfdefault}{m}{sl}
\SetMathAlphabet{\mathsfit}{bold}{\encodingdefault}{\sfdefault}{bx}{n}

\newtcolorbox{takeawaybox}[1]{
    colback=green!5!white,
    colframe=green!50!black,
    fonttitle=\bfseries,
    title=Takeaway
}

\newtheorem{lemma}{Lemma}
\newtheorem{corollary}{Corollary}
\newcommand{\unc}[1]{\textcolor{gray}{\scriptsize$\pm\,#1$}}

\title{ERank in Latent Space as an Image-Complexity and Richness Measure} 

\author[]{Maksim Smirnov}
\author[]{Grigory Kononov}
\author[]{Anastasiia Linich}
\author[]{Egor Surkov}
\author[]{Egor Shvetsov}

\abstract{
We propose the effective rank (ERank) of the channel covariance of an
image's deep feature map as a per-sample, label-free measure of visual
richness, computed from a single forward pass through a frozen
pretrained encoder. ERank counts how many decorrelated channel
directions an image activates, and we characterize its properties,
including its behavior under noise. Empirically, ERank orders images from plain to
visually rich,
correlates with codec bitrate, sharpness, and edge density, and correlates with
human complexity annotations on IC9600 with $r = 0.72$. As a
data-selection criterion, removing low-ERank samples improves
super-resolution and removing high-ERank samples improves OCR, in both
pretraining and finetuning, while selection does not help
classification, segmentation, or denoising. ERank is thus a cheap
richness signal, useful exactly when task difficulty is governed by
input richness.
\date{\today}
}

\begin{document}

\maketitle

\newtheorem{proposition}{Proposition}
\section{Introduction}
In this work we demonstrate that the \emph{effective rank}
(ERank)~\citep{roy2007effective} computed over channels or tokens of an
image's intermediate representation corresponds to image richness and
diversity. To compute ERank, an image is passed through a pretrained
model, its spatial feature map from a given layer is flattened to
obtain an $HW \times C$ matrix, and the exponential of the Shannon
entropy of the normalized eigenvalues of the channel covariance is
computed. The quantity has an appealing interpretation as it counts
how many decorrelated channel directions the image activates, so a
higher ERank corresponds to a less correlated, more diverse feature
map. Beyond this intuition, the measure has appealing properties. It is a
smooth, differentiable relaxation of matrix rank, invariant to
isotropic scaling and to rotations of channel space; we discuss these
properties in detail and provide connections to other measures in
Appendix~\ref{sec:properties}.

We validate the measure along four axes. \textbf{First}, \emph{visual
analysis} shows that sorting images by ERank produces an interpretable
ordering from plain to visually rich images. \textbf{Second}, our analysis demonstrates that ERank
correlates with codec file size, sharpness, and edge density
($r \approx 0.5$) but not with distortion or colorfulness
($r \approx 0.03$). Moreover, it strongly correlates with human
complexity annotations on IC9600~\citep{feng2023ic9600} with
$r = 0.72$. \textbf{Third}, we evaluate ERank as a \textit{data-selection criterion}, where
training samples are ranked by their ERank score, a fraction is
pruned, and the task model is trained on the rest. We evaluate both
pretraining from scratch and finetuning from a checkpoint. Removing
low-ERank samples improves super-resolution on DIV2K, while removing
high-ERank samples helps OCR on IIIT5K, where visually rich samples
are predominantly hard or noisy. In both tasks ERank beats random
removal across pruning budgets. Finally, we demonstrate where data selection with ERank does not help. ERank-based selection does not improve
classification, segmentation, or denoising, and we clarify why in the
corresponding sections.

\section{Related Work}
\label{sec:related}
 
Label-free evaluation of per-sample characteristics is important in a
wide range of applied tasks, including curriculum and active 
learning~\citep{toneva2019forgetting, sener2018coreset} and data selection for pretraining
at scale~\citep{sorscher2022beyond,abbas2023semdedup}. Moreover, per-sample complexity scores have been used to allocate
computation in dynamic architectures to spend less compute on easy
inputs~\citep{han2021dynamic,huang2018msdnet,wang2021dvt}, in particular
ICAR~\citep{williamslekuona2026imagecomplexityawareadaptiveretrieval} routes images through a
vision-language encoder at a depth chosen by a predicted complexity
score.

\textbf{Quality metrics in pixel space.}
A long line of proxies operates directly in pixel space. Shannon
entropy of the intensity histogram~\citep{shannon1948mathematical} and
colorfulness indices~\citep{hasler2003colorfulness} summarize
first-order statistics; fractal dimension captures the self-similarity
of natural textures~\citep{pentland1984fractal}; edge density derived
from the Canny detector~\citep{canny1986computational} and
gradient-energy sharpness scores such as
Tenengrad~\citep{krotkov1988focusing,pertuz2013analysis} quantify
high-frequency content; and the file size produced by a standard codec
such as JPEG~\citep{wallace1992jpeg} serves as a computable surrogate
for Kolmogorov complexity~\citep{li2008kolmogorov} and as a difficulty
signal. No-reference quality measures such as
BRISQUE~\citep{mittal2012brisque} score distortion and naturalness
rather than richness. These measures are cheap but blind to semantics
and confounded by pixel noise. To move beyond pixel statistics, the
unsupervised activation energy (UAE)~\citep{SARAEE2020102949} measures
image complexity in the activation space of a pretrained encoder,
similar in spirit to our approach. We evaluate its correlation with
human complexity annotations alongside ERank in
Section~\ref{sec:compression}.

\textbf{Metrics for Data Selection.}
Data pruning and coreset selection typically rely on one of two
strategies - removing easy and keeping hard samples, as ranked by
training dynamics or error
signals~\citep{toneva2019forgetting,paul2021el2n,xia2023moderate}, or
selecting samples that better cover the data
distribution~\citep{sener2018coreset,abbas2023semdedup}.   In this work we demonstrate
that per-sample diversity is a complementary
signal which enables effective selection of training subsets for
super-resolution and OCR.
 
\textbf{Metrics for Dataset Characterization.}
Characterizing data collectively, at the level of a full dataset in a
latent space, is well established for measuring model quality and
sample-set diversity. FID~\citep{heusel2017fid}, Inception
Score~\citep{salimans2016is},
precision/recall~\citep{sajjadi2018precrec,kynkaanniemi2019improved},
and density/coverage~\citep{naeem2020reliable} measure fidelity and
coverage of a sample set against a reference distribution, treating
each sample as a single embedding vector. ERank has likewise been
applied at this level, the Vendi Score~\citep{friedman2023vendi}
computes ERank of a kernel matrix of pairwise sample similarities to
measure the diversity of a collection, and
RankMe~\citep{garrido2023rankme} computes ERank of the covariance of
sample embeddings over a dataset to evaluate self-supervised models
without labels. 
While these works are the closest to ours, the key difference is that
we compute ERank \emph{within} a single sample, over the channels or
tokens of its feature map, yielding a per-sample score rather than a
set-level one.

\section{ERank as a Measure of Image Richness}
\label{sec:method}

\subsection{Setup and Definition}
\label{sec:setup}
Let $f$ be a pretrained encoder and let $f^{(\ell)}$ denote its
intermediate representation at layer $\ell$. For an image $I$, the
feature map $f^{(\ell)}(I)$ is flattened into
$X^{(\ell)} \in \mathbb{R}^{N \times C}$, where $N = HW$ is spatial size and $C$ the
number of channels (or tokens, for transformer encoders) at layer $\ell$. Let $\bar{X}^{(\ell)}$ denote
$X^{(\ell)}$ after centering each column (channel), and let
\begin{equation}
  \Sigma^{(\ell)} \;=\; \tfrac{1}{N}\,\bar{X}^{(\ell)\top}\bar{X}^{(\ell)}
  \;\in\; \mathbb{R}^{C \times C}
\end{equation}
be the empirical channel covariance at layer $\ell$, with eigenvalues
$\lambda_1 \ge \dots \ge \lambda_C \ge 0$. Writing
$p_i = \lambda_i / \sum_{j} \lambda_j$ for the normalized spectrum, the
\emph{ERank}~\citep{roy2007effective} of the image at layer $\ell$ is
\begin{equation}
  \label{eq:erank}
  \operatorname{erank}(I; f, \ell)
  \;=\;
  \exp\!\Big( -\textstyle\sum_{i=1}^{C} p_i \log p_i \Big),
\end{equation}
with the convention $0 \log 0 = 0$. Richness is thus always measured
\emph{relative to a chosen encoder $f$ and layer $\ell$}.
$\operatorname{erank}(I)$ We write erank(I) for brevity.

\textbf{Properties of ERank.} We summarize the properties of ERank, which determine where it applies and where it fails, deferring statements and proofs to Appendix~\ref{sec:properties}. ERank is bounded between $1$  and $\min(N,C)$, invariant
to isotropic scaling and to rotations of channel space, and unlike
matrix rank continuous and differentiable. An exact identity ties
it to pairwise channel correlation and to mean pairwise similarity via
the Rényi family - ERank is
maximal when channels are decorrelated and collapses as they correlate.
It increases monotonically under additive decorrelated noise, so it cannot separate detail from noise on
its own - a limitation resolved only by the encoder, motivating the
next section.

\textbf{Why Measure Richness in Feature Space?} Three considerations
motivate computing ERank on features rather than pixels. \emph{(i)
Dimensionality:} the pixel channel spectrum is three-dimensional (RGB),
capping ERank at~3, whereas encoders provide from dozens to thousands of
semantically tuned channels. \emph{(ii) Manifold structure:} natural
images concentrate near low-dimensional structure, and pretrained
encoders parameterize coordinates along
it~\citep{ansuini2019intrinsic,pope2021intrinsic}, so channel activity
reflects position on the image manifold rather than raw pixel
statistics. \emph{(iii) Implicit low-rank bias:} gradient-based training is
biased toward low-rank solutions~\citep{arora2019implicit,gunasekar2017implicit,huh2021lowrank},
and trained representations exhibit rapidly decaying
eigenspectra~\citep{jing2022understanding,papyan2020neural} - a trained
encoder therefore does not maintain channel directions it does not
need for its objective. When an image nevertheless elicits a broad channel spectrum, that
breadth is informative about the \emph{input} rather than an artifact of
the representation.

\section{Experiments \& Results}
\label{sec:experiments}

\subsection{Experimental Setup}
\label{sec:setup-exp}

\textbf{Scoring backbones.}
Every image is scored by a single forward pass through one of two pretrained encoders: \textbf{ResNet-18}~\citep{he2016resnet}
pretrained on ImageNet and \textbf{CLIP ViT-B/32}~\citep{radford2021clip}.
We use layers $\{1,2,3,4\}$ of the ResNet and blocks $\{2,5,8,11\}$ of
CLIP, each feature map is flattened into a matrix of spatial positions
(patch tokens for CLIP) by channels. ERank is computed on each layer
separately and averaged, yielding one  per-image score. The layers were fixed a priori to span the network depth and
were not tuned.

\textbf{Datasets and task models.}
The correlation analysis 
use random LAION~\citep{schuhmann2022laion} subsets of fixed sizes  and full
IC9600~\citep{feng2023ic9600}. The
data-selection experiments (Sec.~\ref{sec:helped}) cover five tasks:
\textbf{(1)} $\times4$ super-resolution (DIV2K~\citep{agustsson2017div2k},
 EDSR~\citep{lim2017edsr}), \textbf{(2)} scene-text recognition
(IIIT5K~\citep{mishra2012iiit5k}, CRNN~\citep{shi2017crnn} with CTC
loss~\citep{graves2006ctc}), \textbf{(3)} classification
(MNIST~\citep{lecun1998mnist}, CIFAR-10~\citep{krizhevsky2009cifar},
ResNet-18), \textbf{(4)} Gaussian denoising (DIV2K, $\sigma{=}25$, EDSR), and
\textbf{(5)} semantic segmentation (Pascal VOC~\citep{everingham2010pascal},
DeepLabV3--ResNet50~\citep{chen2017deeplabv3}). Selection is evaluated
in two regimes  \emph{pretrain}, training from scratch and \emph{finetune}. For finetuning we used EDSR $\times4$ from an EDSR $\times2$
checkpoint trained on all $800$ DIV2K images, and CRNN with a frozen
ImageNet ResNet-18 stem and trainable CTC head.

\textbf{Pruning protocol.}
We rank the training set by a score, remove $10$--$50\%$ of the
lowest or highest-scoring samples, retrain on the remainder, and
compare against \textbf{random} removal at the same budget as a  data-selection
baseline~\citep{guo2022deepcore,sorscher2022beyond}. All models are
trained to convergence.

\subsection{What ERank selects - visual analysis}
\label{sec:visual}

In Figure~\ref{fig:extremes} we demonstrate the highest- and lowest-scoring samples of the
ERank across the datasets, scores are obtained with CLIP model as encoder. The pattern is consistent across datasets, 
\textit{Low-ERank} images are plain and low-detail, whereas \textit{High-ERank} images
are visually busy, textured, multi-object, and high-frequency.
\begin{figure*}[t]
  \centering
  \includegraphics[width=1\linewidth]{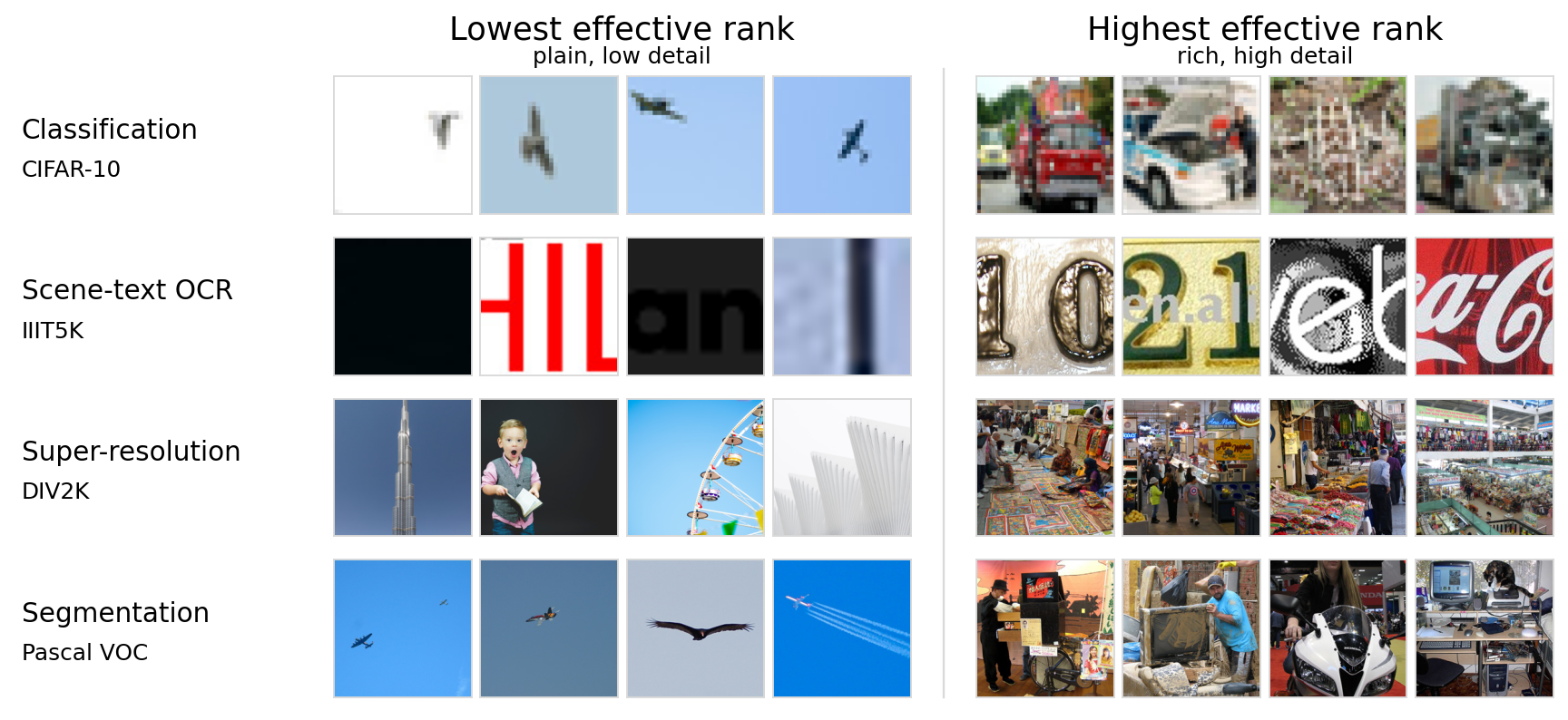}
  \caption{Images with highest and lowest ERank obtained with CLIP encoder across five task specific datasets.}
  \label{fig:extremes}
\end{figure*}

\subsection{ERank is an image-complexity score}
\label{sec:compression}

Visual analysis suggests that ERank is an image-complexity measure. We
test this directly by correlating ERank with five representative
statistics on LAION (Table~\ref{tab:whatis}): (1) Tenengrad sharpness, (2) JPEG
bytes-per-pixel as a compressibility proxy, (3) Canny edge density, (4) No-Reference IQA BRISQUE
distortion, and (5) colorfulness. The largest associations are with compressibility, sharpness, and edge density, while distortion and colorfulness are negligible. This supports interpreting ERank as a compact detail/complexity signal rather than as a generic quality or color score.

\textbf{Controlled corruption probe.} We further probe how ERank reacts to
degradations of a single image for that we corrupt 1000 LAION images with cutout,
Gaussian pixel noise, and Gaussian blur, and report rank-shift of
each image relative to its clean version, results are presented in Table~\ref{tab:corruption}. Blur
causes the largest drops at early and middle layers of both backbones,
confirming that ERank tracks high-frequency detail, while cutout shifts ranks
only mildly. The sign flips at the deepest layers motivate averaging ERank across depths
(or using middle layers only). We attribute the effect to late layers
encoding task-aligned directions where a confidently recognized image
concentrates on few such directions, while a corrupted, ambiguous input
spreads energy across many, raising ERank.
\begin{figure*}[ht]
  \centering
  \begin{minipage}[t]{0.55\textwidth}
    \vspace{0pt}
    \centering
    \captionof{table}{Controlled corruption probe on LAION (1000 samples).
    Each cell is the paired ERank rank-shift (corrupted minus clean).}
    \label{tab:corruption}
    {\renewcommand{\arraystretch}{1.3}%
    \resizebox{\linewidth}{!}{%
    \begin{tabular}{@{}llr@{\,}lr@{\,}lr@{\,}l@{}}
      \toprule
      Model & Layers
        & \multicolumn{2}{c}{Cutout}
        & \multicolumn{2}{c}{Noise}
        & \multicolumn{2}{c@{}}{Blur} \\
      \midrule
      \multirow{3}{*}{ResNet-18}
        & 1  & $-36$  & \unc{49}  & $-158$ & \unc{124} & $-498$ & \unc{288} \\
        & 3 & $+4$   & \unc{79}  & $-127$ & \unc{127} & $-322$ & \unc{181} \\
        & 4   & $+101$ & \unc{150} & $-136$ & \unc{211} & $+166$ & \unc{272} \\
      \midrule
      \multirow{3}{*}{CLIP ViT-B/32}
        & 2  & $-239$ & \unc{164} & $+45$  & \unc{148} & $-490$ & \unc{281} \\
        & 6 & $-71$  & \unc{106} & $-42$  & \unc{105} & $-409$ & \unc{227} \\
        & 11   & $+32$  & \unc{93}  & $-103$ & \unc{128} & $-368$ & \unc{237} \\
      \bottomrule
    \end{tabular}%
    }}
  \end{minipage}\hfill
  \begin{minipage}[t]{0.42\textwidth}
    \vspace{0pt}
    \centering
    \captionof{table}{ERank correlation (PLCC) with qualitative metrics on
    LAION (2000  samples). ERank tracks detail, not colour or distortion.}
    \label{tab:whatis}
    \small
    \begin{tabular}{@{}lcc@{}}
      \toprule
      Metric & $|r|$ & $p$-value \\
      \midrule
      Sharpness (Tenengrad)   & $\mathbf{0.47}$ & $<0.05$ \\
      Compressibility (JPEG)  & $\mathbf{0.52}$ & $<0.05$ \\
      Edge density            & $\mathbf{0.49}$ & $<0.05$ \\
      Distortions (BRISQUE)   & $0.02$ & $0.28$ \\
      Colourfulness           & $0.03$ & $0.27$ \\
      \bottomrule
    \end{tabular}
  \end{minipage}
\end{figure*}

\textbf{Validation against human complexity labels.} Further, we measure the Pearson correlation of ERank with \emph{human} judgments on IC9600~\citep{feng2023ic9600}, a benchmark of $9{,}600$ images each annotated with an image-complexity score. Alongside ResNet ERank, we evaluate \emph{Unsupervised Activation Energy} (UAE) on
IC9600 as the strongest unsupervised image complexity proxy according to ~\citep{SARAEE2020102949}. UAE is the mean nonnegative activation
over spatial positions and channels, averaged over the same four layers.
The results in Figure~\ref{fig:ic9600} demonstrate that ResNet ERank strongly tracks the human label -  $r = 0.72$, while the corresponding ResNet UAE score is weaker - $r = 0.53$ (or $r = 0.68$ for the original implementation using VGG-16).

\begin{figure*}[ht!]
  \centering
  \includegraphics[width=0.7\textwidth]{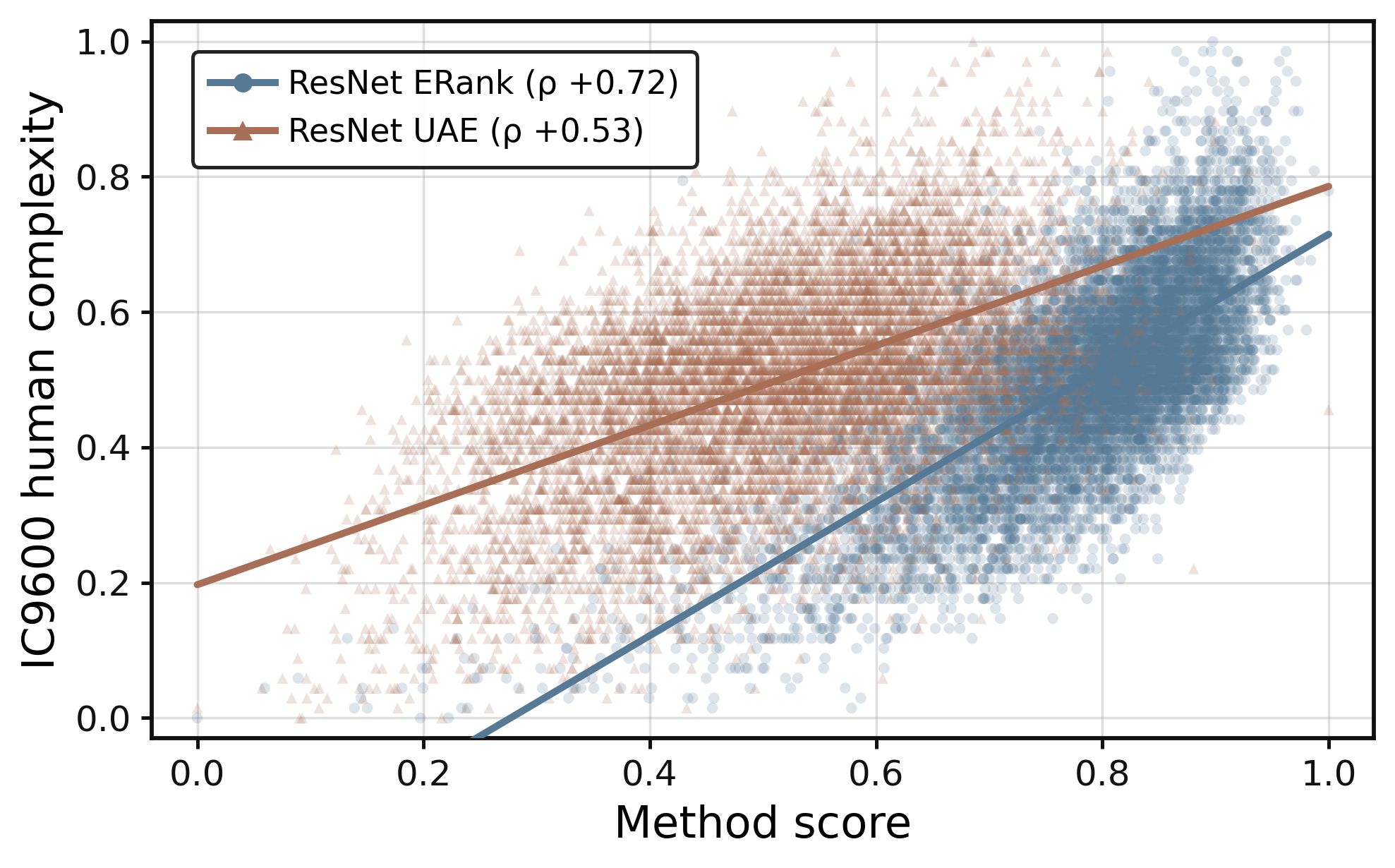}
  \caption{Correlation (PLCC) of ERank and UAE~\citep{SARAEE2020102949} with human annotated image-complexity scores on
  IC9600~\citep{feng2023ic9600}. Both $r$ are statistically significant with $p$ value below $0.05$.}
  \label{fig:ic9600}
\end{figure*}

\subsection{Data Selection with ERank}
\label{sec:helped}
We further analyse whether ERank-based data selection outperforms random
sampling for model pretraining and finetuning. Random sampling is a
notoriously strong baseline for data
selection~\citep{guo2022deepcore, okanovic2024repeated, ayed2023data, moser2025coreset},
often matching or exceeding sophisticated score-based methods, particularly
at moderate pruning ratios~\citep{sorscher2022beyond}. We therefore compare
ERank-based selection directly against random sampling under matched budgets
on various tasks.

\textbf{Where ERank helps.}
\textit{Super-resolution (DIV2K $\times4$).} Removing plain, low-ERank
images whose flat patches carry little high-frequency content to
reconstruct improves PSNR over random removal at every budget
(Figures~\ref{fig:sr-retention-pretrain} and~\ref{fig:sr-retention-finetune},
Table~\ref{tab:helped}). Intuitively, the SR model learns little from plain
images.
\textit{Scene-text recognition (IIIT5K).} Here the pattern reverses, the
visually \emph{rich} images are the \emph{hard} ones, as dense texture and
cluttered backgrounds hurt recognition, so removing the highest-ERank
examples helps (Table~\ref{tab:helped}). Overall, CLIP-derived ERank scores
outperforms or matches random at most budgets, with the exception of the 30\% pretraining budget on OCR, whereas
ResNet-derived scores underperform on OCR in both scenarios.

\begin{figure*}[t]
  \centering
  \begin{subfigure}[t]{0.48\textwidth}
    \centering
    \includegraphics[width=\linewidth]{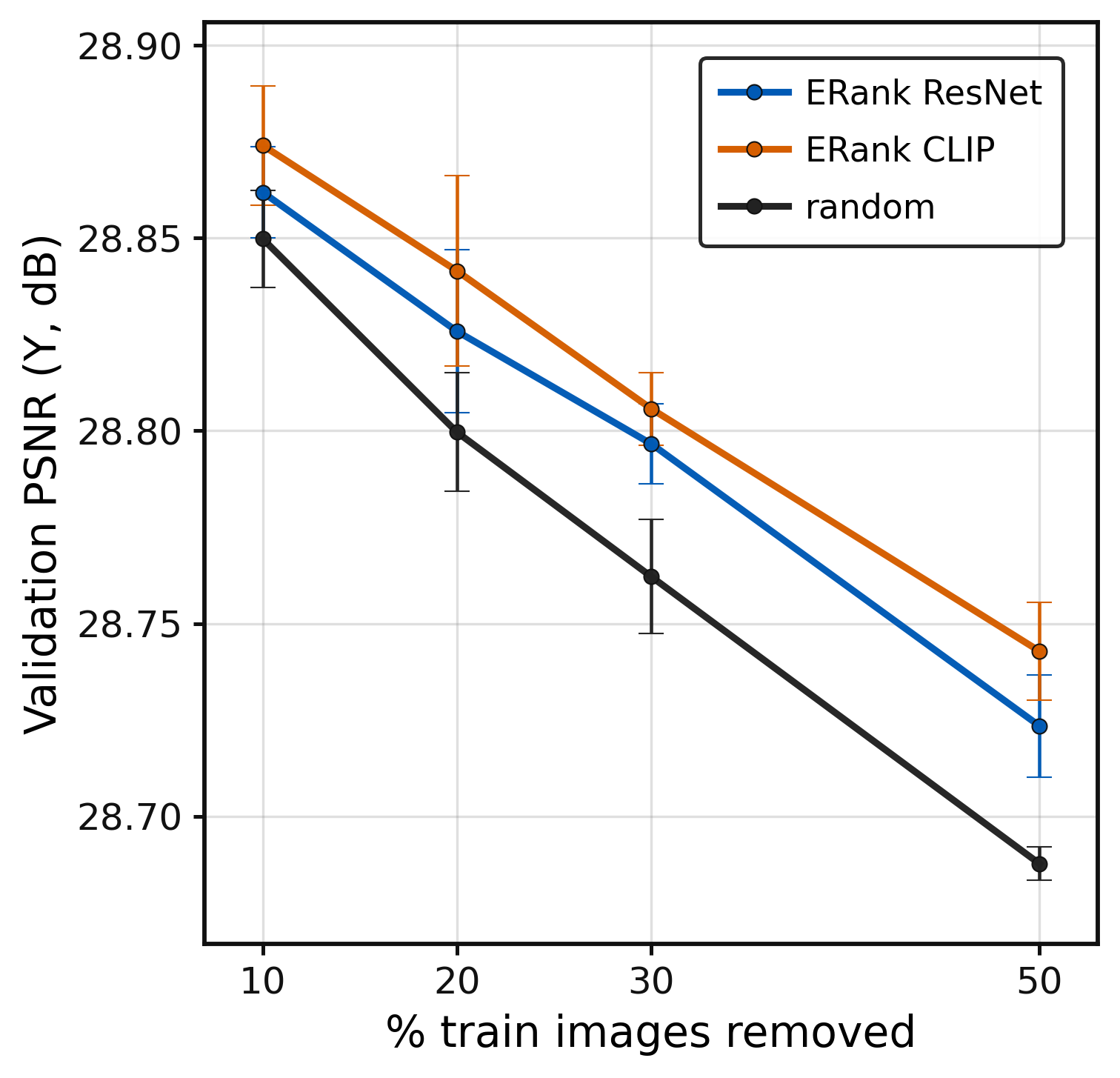}
    \caption{Training from scratch}
    \label{fig:sr-retention-pretrain}
  \end{subfigure}\hfill
  \begin{subfigure}[t]{0.48\textwidth}
    \centering
    \includegraphics[width=\linewidth]{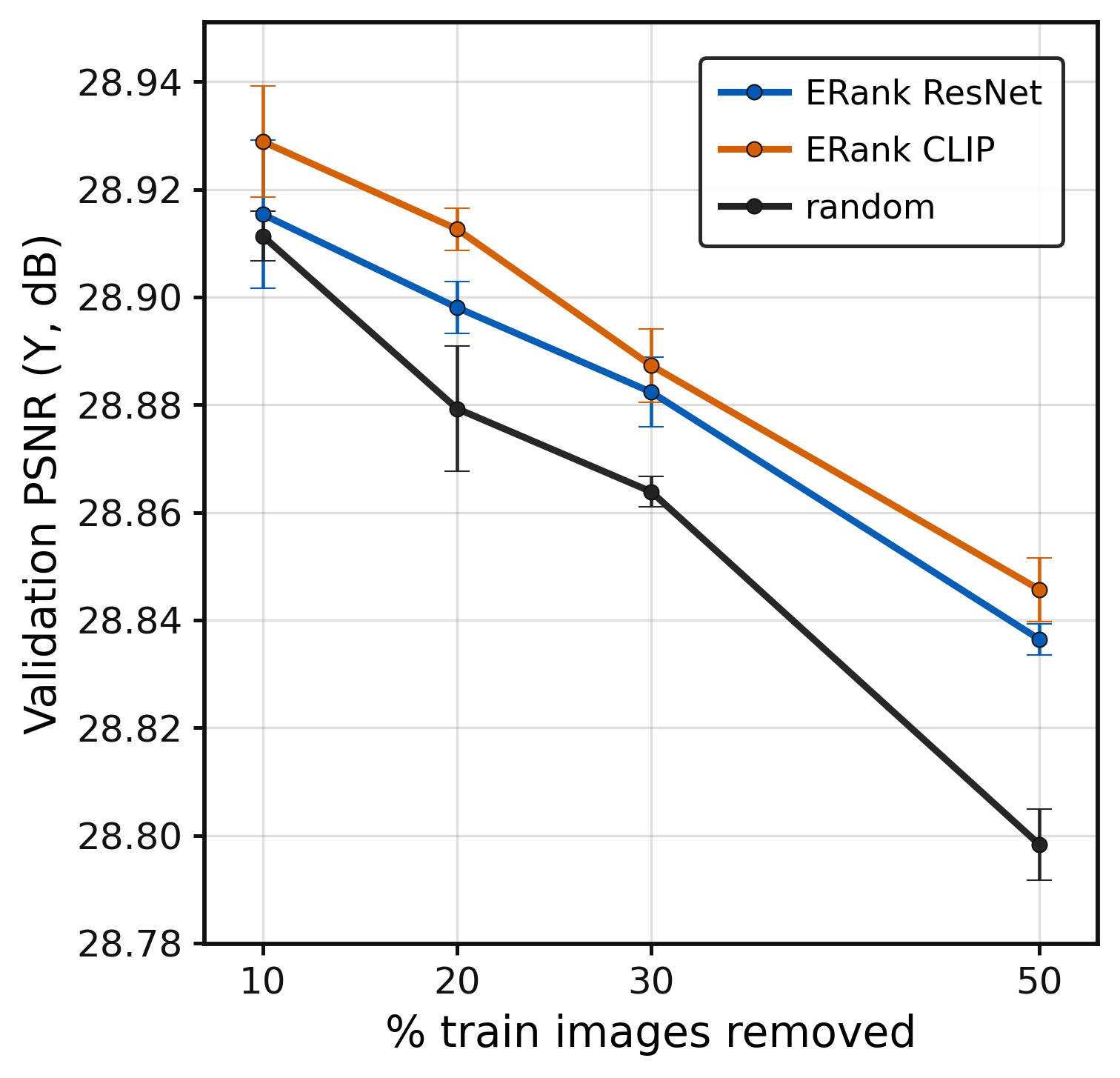}
    \caption{Finetuning from $\times2$ checkpoint}
    \label{fig:sr-retention-finetune}
  \end{subfigure}
  \caption{DIV2K $\times4$ SR retention. The curves show ERank with the
  ResNet-18 and CLIP ViT-B/32 scoring backbones, together with one shared
  random-removal baseline. Curves are distinguished only by color; error bars
  show the standard deviation over three task-model seeds.}
  \label{fig:sr-retention}
\end{figure*}

\newcommand{\std}[1]{\textcolor{gray}{\scriptsize$\pm$#1}}

\begin{table*}[ht!]
  \centering
  \caption{\textbf{Data Selection results} on DIV2K $\times4$ super-resolution  and IIIT5K OCR. Results are
  reported for ERank computed
  from ResNet-18 or CLIP ViT-B/32 features. Cells are mean\std{std} over
  three seeds. Best performance reported in bold.}
  \label{tab:helped}
  \small
  \begin{tabular}{@{}llcccc@{}}
    \toprule
    & & \multicolumn{4}{c@{}}{Pruning ratio} \\
    \cmidrule(l){3-6}
    & Selection & 10\% & 20\% & 30\% & 50\% \\
    \midrule
    \multicolumn{6}{@{}l}{\emph{DIV2K $\times4$ super-resolution, PSNR (dB) $\uparrow$ } - \textbf{remove low-ERank images}}\\
    \midrule
    \multirow{3}{*}{\textbf{Pretrain}}
      & Random               & 28.850\std{.013} & 28.800\std{.015} & 28.762\std{.015} & 28.688\std{.004} \\
      & ERank (ResNet-18)    & 28.862\std{.012} & 28.826\std{.021} & 28.797\std{.010} & 28.723\std{.013} \\
      & ERank (CLIP)         & \textbf{28.874}\std{.015} & \textbf{28.841}\std{.025} & \textbf{28.806}\std{.009} & \textbf{28.743}\std{.013} \\
    \addlinespace
      \midrule
    \multirow{3}{*}{\textbf{Finetune}}
      & Random               & 28.911\std{.005} & 28.879\std{.010} & 28.864\std{.003} & 28.798\std{.006} \\
      & ERank (ResNet-18)    & 28.915\std{.014} & 28.898\std{.005} & 28.882\std{.006} & 28.836\std{.003} \\
      & ERank (CLIP)         & \textbf{28.929}\std{.010} & \textbf{28.913}\std{.004} & \textbf{28.887}\std{.007} & \textbf{28.846}\std{.006} \\
    \midrule
    \multicolumn{6}{@{}l}{\emph{IIIT5K OCR, word accuracy $\uparrow$} - \textbf{remove high-ERank images}}\\
    \midrule
    \multirow{3}{*}{\textbf{Pretrain}}
      & Random               & 0.346\std{.022} & 0.323\std{.055} & \textbf{0.285}\std{.027} & 0.021\std{.010} \\
      & ERank (ResNet-18)    & 0.308\std{.073} & 0.261\std{.076} & 0.116\std{.038} & 0.019\std{.010} \\
      & ERank (CLIP)         & \textbf{0.361}\std{.026} & \textbf{0.340}\std{.042} & 0.224\std{.067} & \textbf{0.027}\std{.043} \\
    \addlinespace
    \midrule
    \multirow{3}{*}{\textbf{Finetune}}
      & Random               & \textbf{0.206}\std{.017} & 0.188\std{.011} & 0.167\std{.006} & 0.126\std{.010} \\
      & ERank (ResNet-18)    & 0.199\std{.007} & 0.182\std{.002} & 0.165\std{.007} & 0.137\std{.001} \\
      & ERank (CLIP)         & 0.202\std{.004} & \textbf{0.189}\std{.008} & \textbf{0.176}\std{.004} & \textbf{0.143}\std{.002} \\
    \bottomrule
  \end{tabular}
\end{table*}

\textbf{Where ERank does not help.} On the remaining tasks, ERank selection
shows no significant improvement over random at the reported budgets. For
classification (MNIST/CIFAR-10, ResNet-18), ERank tracks random closely:
sample usefulness is governed by class structure, to which an unsupervised
richness score is blind. For denoising (DIV2K, EDSR, $\sigma{=}25$), ERank
stays within ${\sim}0.03$~dB of random. For segmentation (Pascal VOC,
DeepLabV3--ResNet50), random is noticeably better at aggressive removal and
on par elsewhere, plausibly because segmentation quality depends on spatial
layout, to which ERank is invariant by construction
(Appendix~\ref{sec:properties}, property~(ii)). In short, ERank helps only
when visual richness aligns with sample usefulness. While constructing more
sophisticated active-learning-style pipelines with scheduled sample
complexity could improve our results, it is beyond the scope of this work.

\vspace{-1em}
\section{Conclusion}
\label{sec:conclusion}

We proposed the effective rank of the channel covariance of an
image's feature map as a per-sample, label-free measure of visual
richness, computable from a single forward pass through an
off-the-shelf encoder. Empirically, ERank orders images from plain to visually rich and
correlates with codec bitrate, sharpness, edge density, and human
complexity annotations on IC9600. It further serves as a
data-selection signal for super-resolution and OCR.

The main practical takeaway is the boundary of the measure. ERank
helps exactly when visual richness is the governing axis of task
difficulty, as in super-resolution, where flat images carry little
signal, and in clutter-hard OCR. It reduces to random selection when
difficulty is set by class labels, by spatial layout, or by noise, as
our classification, segmentation, and denoising experiments show.

\textbf{Limitations.} ERank is relative to a
choice of encoder and layers, our layers were fixed a priori and not
tuned, and results may differ under a systematic layer-selection
study. Our evidence covers two backbones, moderate pruning ratios, and
compact task models such as EDSR and CRNN, so whether the selection
gains persist for stronger, large-scale models in pretraining and
finetuning remains open.

Natural next steps are to use ERank as a differentiable training
objective rather than only a selection criterion, and as a cheap
complexity signal where a dedicated predictor is used today, such as
auxiliary targets and loss weighting~\citep{feng2023ic9600} or
complexity-routed adaptive computation~\citep{williamslekuona2026imagecomplexityawareadaptiveretrieval}.

\newpage

%
%
\bibliographystyle{plainnat}
\bibliography{main}

\appendix

\section*{Appendix}

\section{A Family of Spectral Effective-Dimension Measures}
\label{sec:family}

ERank, the participation ratio, and the stable rank are all members of
a single one-parameter family of spectral measures, and several
statements below are cleanest when read against it. We therefore begin
by defining the family, the propositions that follow are then proved
for whichever member is most tractable and transferred to ERank by the
family's monotonicity.

Let $\lambda_1 \ge \dots \ge \lambda_C \ge 0$ be the eigenvalues of a
covariance (or Gram) matrix, in our case the channel covariance
$\Sigma = \tfrac{1}{N}\bar{X}^\top\bar{X} \in \mathbb{R}^{C\times C}$,
where $\bar{X}$ is the column-centered feature matrix  and let
$p_i = \lambda_i / \sum_j \lambda_j$ be the normalized spectrum, so
that $p = (p_1, \dots, p_C)$ is a probability distribution over the $C$
directions. The \emph{Rényi entropy of order} $\alpha \ge 0$ of this distribution is
\begin{equation}
  \label{eq:renyi-entropy}
  H_\alpha(p) \;=\; \frac{1}{1-\alpha}\,
  \log \sum_{i=1}^{C} p_i^{\alpha},
\end{equation}
with the limiting cases $H_1(p) = -\sum_i p_i \log p_i$ (the Shannon
entropy) and $H_\infty(p) = -\log \max_i p_i$. Exponentiating yields a
family of \emph{effective ranks},
\begin{equation}
  \label{eq:eff-rank-family}
  \operatorname{erank}_\alpha
  \;=\;
  \exp\!\big(H_\alpha(p)\big)
  \;=\;
  \Big(\textstyle\sum_{i=1}^{C} p_i^{\alpha}\Big)^{\!\frac{1}{1-\alpha}},
\end{equation}
each an \emph{effective number} of active directions,
$\operatorname{erank}_\alpha = C$ when the spectrum is flat
($p_i = 1/C$) and $\operatorname{erank}_\alpha = 1$ when it collapses
onto a single direction, for every $\alpha$. In ecology these are the
Hill numbers~\citep{hill1973diversity,jost2006entropy}, and the same
construction underlies the diversity measure
of~\citep{friedman2023vendi}. 
The order $\alpha$ controls how strongly small eigenvalues are
weighted,  larger $\alpha$ suppresses the tail of the spectrum, so
higher-order members are less sensitive to the many small,
often noise-dominated directions.

Below we mainly consider the following three members:
\begin{itemize}
  \item $\alpha = 1$: \textbf{ERank}~\citep{roy2007effective}, the
  exponential Shannon entropy of the spectrum.
  \item $\alpha = 2$: the \textbf{Participation Ratio}
  \begin{equation}
    \label{eq:pr-def}
    \operatorname{PR}(\Sigma)
    \;=\; \frac{1}{\sum_i p_i^2}
    \;=\; \frac{(\operatorname{tr}\Sigma)^2}{\operatorname{tr}(\Sigma^2)},
  \end{equation}
  \item $\alpha \to \infty$: the \textbf{Stable Rank}
  $1 / \max_i p_i = (\operatorname{tr}\Sigma)/\lambda_{\max}$, governed
  by the single largest eigenvalue.
\end{itemize}
 Two properties of the family are
used repeatedly. First, $\operatorname{erank}_\alpha$ is
\emph{nonincreasing} in $\alpha$, giving in particular
\begin{equation}
  \label{eq:renyi}
  \operatorname{erank}(I) \;=\; \operatorname{erank}_1
  \;\ge\; \operatorname{erank}_2 \;=\; \operatorname{PR}(\Sigma).
\end{equation}
Second, every $\operatorname{erank}_\alpha$ rewards a more even
spectrum and the maximum is reached exactly when
all eigenvalues are equal.  

\section{Basic Properties}
\label{sec:properties}

We record the properties that make ERank well-suited as a per-sample
measure - all are elementary consequences of the definition, of the
family in Sec.~\ref{sec:family}, and of majorization
theory~\citep{roy2007effective,marshall2011majorization}.

\emph{(i) Bounds.}
$1 \le \operatorname{erank}(I) \le \operatorname{rank}(\Sigma) \le
\min(N-1, C)$. The lower bound is attained iff the spectrum concentrates
on a single direction, the upper bound $C$ iff the spectrum is flat
($\lambda_1 = \dots = \lambda_C$). The effective-number interpretation
and the relation $\operatorname{erank} \ge \operatorname{PR}$
(Eq.~\eqref{eq:renyi}) follow directly from Sec.~\ref{sec:family}.

\emph{(ii) Invariances.} ERank is invariant to isotropic rescaling
$X \mapsto \alpha X$, to orthogonal transformations of channel space
$X \mapsto X Q$, and to any permutation of spatial positions applied
jointly across channels ($X \mapsto PX$ with $P$ a permutation matrix),
since $\Sigma = \tfrac1N X^\top P^\top P X = \tfrac1N X^\top X$. The
last invariance is deliberate but consequential.

\emph{(iii) Continuity and differentiability.} Unlike matrix rank,
which is discontinuous and requires an arbitrary threshold, ERank is
continuous in $X$ and differentiable wherever the spectrum is simple.
It can therefore serve not only as a diagnostic but, in principle, as a
training objective or regularizer, as in label-free representation
evaluation~\citep{garrido2023rankme}.

\emph{(iv) Concavity and Schur-concavity.} ERank
is Schur-concave in the spectrum: any smoothing of the eigenvalue
distribution at fixed total variance increases it. Moreover, ERank is
log-concave as a function of the normalized eigenvalues. It is thus a
canonical measure of spectral flatness, maximized exactly by the white
spectrum.

\subsection{Connection to Channel Correlation}
\label{sec:corr}

We make precise the sense in which ERank measures channel
decorrelation. The identity is exact for the order-2 member of the
family (the participation ratio, Eq.~\eqref{eq:pr-def}) and transfers
to ERank through the bound of Eq.~\eqref{eq:renyi}.

\begin{proposition}[ERank and mean squared correlation]
\label{prop:corr}
Let $R \in \mathbb{R}^{C \times C}$ be the channel correlation matrix,
i.e.\ the covariance of the per-channel standardized features, and let
$\bar{r}^2 = \tfrac{2}{C(C-1)} \sum_{i<j} r_{ij}^2$ be the mean squared
pairwise correlation between channels. Then
\begin{equation}
  \label{eq:pr-identity}
  \operatorname{PR}(R)
  \;=\;
  \frac{C}{1 + (C-1)\,\bar{r}^2},
\end{equation}
and consequently, by Eq.~\eqref{eq:renyi},
\begin{equation}
  \label{eq:erank-corr-bound}
  \operatorname{erank}(R)
  \;\ge\;
  \frac{C}{1 + (C-1)\,\bar{r}^2}.
\end{equation}
\end{proposition}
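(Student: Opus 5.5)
The plan is to compute both traces appearing in the closed form of the participation ratio, Eq.~\eqref{eq:pr-def}, directly from the entries of $R$. Because $R$ is the covariance of per-channel standardized features, its diagonal entries are $R_{ii}=1$ and its off-diagonal entries are $R_{ij}=r_{ij}=r_{ji}$. This requires every channel to have nonzero variance; otherwise standardization is undefined, and I will state this as a standing assumption. With it, $\operatorname{tr} R = C$. Since $R$ is symmetric,
\begin{equation*}
\operatorname{tr}(R^2)=\sum_{i,j} R_{ij}^2 = C + 2\sum_{i<j} r_{ij}^2 .
\end{equation*}

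Next I substitute the definition of $\bar r^2$, namely $\sum_{i<j} r_{ij}^2 = \tfrac{C(C-1)}{2}\,\bar r^2$. This gives
\begin{equation*}
\operatorname{tr}(R^2) = C + C(C-1)\bar r^2 = C\bigl(1+(C-1)\bar r^2\bigr).
\end{equation*}
Plugging both traces into Eq.~\eqref{eq:pr-def} yields
\begin{equation*}
\operatorname{PR}(R) = \frac{C^2}{C\bigl(1+(C-1)\bar r^2\bigr)},
\end{equation*}
which simplifies to Eq.~\eqref{eq:pr-identity}. One subtlety needs checking. Eq.~\eqref{eq:pr-def} was stated for $\Sigma$, but the identity $1/\sum_i p_i^2 = (\operatorname{tr}M)^2/\operatorname{tr}(M^2)$ holds for any positive semidefinite $M$ with nonzero trace. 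This follows from $\sum_i\lambda_i=\operatorname{tr}M$ and $\sum_i\lambda_i^2=\operatorname{tr}(M^2)$. Since $R$ is a covariance matrix, it is PSD, and its trace is $C>0$, so the identity applies to $R$.

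For the bound Eq.~\eqref{eq:erank-corr-bound}, I invoke the monotonicity of the Rényi family, Eq.~\eqref{eq:renyi}, applied to the normalized spectrum of $R$. This gives $\operatorname{erank}(R)=\operatorname{erank}_1\ge\operatorname{erank}_2=\operatorname{PR}(R)$. If that monotonicity needs justification, I would use Jensen's inequality. Convexity of $-\log$ gives $\log\sum_i p_i^2 = \log \mathbb{E}_{p}[p_i] \ge \mathbb{E}_p[\log p_i] = \sum_i p_i\log p_i$. Hence $H_2\le H_1$, and exponentiating gives the claim.

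I do not expect a real obstacle here. The only care needed is the bookkeeping factor of $2$ from the symmetric off-diagonal terms, and keeping the nondegenerate-variance assumption explicit so that $R$ is well defined with unit diagonal.
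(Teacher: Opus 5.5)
Your proof is correct and follows the paper's argument exactly. Like the paper, you compute $\operatorname{tr}R = C$ and $\operatorname{tr}(R^2)=\lVert R\rVert_F^2 = C + C(C-1)\bar r^2$, substitute into $(\operatorname{tr}R)^2/\operatorname{tr}(R^2)$, and then invoke $\operatorname{erank}_1 \ge \operatorname{erank}_2$; your extra remarks (nonzero channel variances, the trace formula holding for any PSD matrix with positive trace, and the Jensen proof of $H_2 \le H_1$) are valid refinements that the paper leaves implicit.
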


\begin{proof}
Since $R$ has unit diagonal, $\operatorname{tr} R = C$ and
\begin{equation*}
  \operatorname{tr}(R^2)
  = \lVert R \rVert_F^2
  = C + 2\!\sum_{i<j} r_{ij}^2
  = C + C(C-1)\,\bar{r}^2 .
\end{equation*}
Hence $\operatorname{PR}(R) = (\operatorname{tr}R)^2 /
\operatorname{tr}(R^2) = C^2 / \big(C + C(C-1)\bar{r}^2\big)$, which is
Eq.~\eqref{eq:pr-identity}. The
inequality~\eqref{eq:erank-corr-bound} follows from
Eq.~\eqref{eq:renyi} applied to $R$.
\end{proof}

\begin{corollary}
\label{cor:corr}
$\operatorname{erank}(R) = C$ iff all channels are pairwise
uncorrelated, and $\operatorname{PR}(R) \to 1$ as
$\lvert r_{ij}\rvert \to 1$ for all pairs. High mean squared channel
correlation forces low ERank, at the explicit rate of
Eq.~\eqref{eq:pr-identity}.
\end{corollary}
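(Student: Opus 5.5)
The statement is Corollary~\ref{cor:corr}. I would prove its three claims separately, using the identity of Proposition~\ref{prop:corr} together with the equality case in property~(i) of Sec.~\ref{sec:properties}.

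\emph{First claim: $\operatorname{erank}(R)=C$ iff all channels are pairwise uncorrelated.} By property~(i), $\operatorname{erank}(R)=C$ holds exactly when the spectrum of $R$ is flat, $\lambda_1=\dots=\lambda_C$. Since $\operatorname{tr}R=C$, the common value must be $1$. $R$ is symmetric, so it is orthogonally diagonalizable, and a flat spectrum equal to $1$ forces $R=I$, which means $r_{ij}=0$ for all $i\neq j$. Conversely, if every $r_{ij}=0$, then $R=I$, every $p_i=1/C$, and $\operatorname{erank}(R)=C$. I would also record the quantitative version of the forward direction, since it is the point of the section. Combining $\operatorname{erank}(R)\le C$ with Eq.~\eqref{eq:erank-corr-bound} gives $C/(1+(C-1)\bar r^2)\le \operatorname{erank}(R)\le C$. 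So $\bar r^2=0$ also yields $\operatorname{erank}(R)=C$ directly through the proposition.

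\emph{Second claim: $\operatorname{PR}(R)\to1$ as $|r_{ij}|\to1$ for all pairs.} If every $|r_{ij}|\to1$, then $\bar r^2\to1$. Substituting into Eq.~\eqref{eq:pr-identity} gives $\operatorname{PR}(R)\to C/(1+(C-1))=1$. The map $\bar r^2\mapsto C/(1+(C-1)\bar r^2)$ is continuous on $[0,1]$, so the limit is immediate.

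\emph{Third claim: high mean squared correlation forces low ERank at the rate of Eq.~\eqref{eq:pr-identity}.} This is the step I expect to be the main obstacle, because the Rényi ordering in Eq.~\eqref{eq:renyi} runs the wrong way. It gives $\operatorname{erank}\ge\operatorname{PR}$, which is a lower bound on ERank, not an upper one. My plan is to state honestly what the rate controls. Eq.~\eqref{eq:pr-identity} fixes $\operatorname{PR}(R)$ exactly as a decreasing function of $\bar r^2$, and $\operatorname{PR}=\operatorname{erank}_2$ is itself a member of the effective-rank family. For ERank proper, I would argue collapse through continuity at the extreme. As $\bar r^2\to1$, $\operatorname{tr}(R^2)\to C^2=(\operatorname{tr}R)^2$. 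Because $\operatorname{tr}(R^2)=\sum_i\lambda_i^2\le(\sum_i\lambda_i)^2$ with equality iff exactly one eigenvalue is nonzero, this forces $p\to(1,0,\dots,0)$. By property~(iii), $\operatorname{erank}(R)$ is continuous in the spectrum, so $\operatorname{erank}(R)\to1$.

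If an explicit ERank rate is required, I would first try a bound of the form $\operatorname{erank}\le\exp(h(p_1))+\dots$ using $p_1\ge 1/\operatorname{PR}$. Otherwise I would phrase the rate claim as applying to $\operatorname{PR}$, with ERank's collapse stated only in the limit.
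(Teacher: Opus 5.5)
Your proposal is correct. On the first two claims it follows the paper's route exactly: the paper gives no separate proof of Corollary~\ref{cor:corr} and reads it off Proposition~\ref{prop:corr} together with the equality case of property~(i), as you do.

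On the third sentence you depart from the paper, and you are right to. The paper's only justification is that the PR identity ``transfers to ERank through'' $\operatorname{erank}\ge\operatorname{PR}$. That is a lower bound, so it cannot force ERank down, and in fact the stated rate is false for ERank. Take the equicorrelation matrix $R=(1-\rho)I+\rho\,\mathbf{1}\mathbf{1}^\top$. Then $\bar r^2=\rho^2$ and $\operatorname{PR}(R)=C/(1+(C-1)\rho^2)<1/\rho^2$. The $C-1$ eigenvalues equal to $1-\rho$ alone give $\operatorname{erank}(R)\ge C^{(1-\rho)(C-1)/C}$, which is unbounded in $C$ at fixed $\rho$. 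For instance, with $C=512$ and $\rho=1/2$, $\operatorname{PR}\approx 4$ but $\operatorname{erank}\approx 45$. So your reading is the correct content of that sentence, and the paper's version overstates it: the exact rate holds for $\operatorname{PR}=\operatorname{erank}_2$, while for ERank you only get collapse as $\bar r^2\to 1$, or a dimension-dependent bound.

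Three tightenings are worth making.
\begin{enumerate}
\item You present the sandwich $C/(1+(C-1)\bar r^2)\le\operatorname{erank}(R)\le C$ as a quantitative form of the forward direction. It only gives the ``if'' direction (uncorrelated $\Rightarrow$ $\operatorname{erank}(R)=C$). The ``only if'' direction genuinely needs the equality case of property~(i), as in your first argument.
\item For the limit, the one-line route is $\max_i p_i\ge\sum_i p_i^2=1/\operatorname{PR}(R)\to 1$, followed by continuity of Shannon entropy on the simplex. Property~(iii) concerns continuity in $X$, not in the spectrum.
\item Your fallback bound should put the tail inside the exponent. Let $h(t)=-t\log t-(1-t)\log(1-t)$. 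By grouping, $H_1(p)\le h(p_1)+(1-p_1)\log(C-1)$. The right side is decreasing in $p_1$ on $[1/C,1]$, and $p_1\ge q:=1/\operatorname{PR}(R)=(1+(C-1)\bar r^2)/C\ge 1/C$. Hence $\operatorname{erank}(R)\le\exp\big(h(q)+(1-q)\log(C-1)\big)$. This bound tends to $1$ as $\bar r^2\to 1$. As the example above shows, any such bound must depend on $C$.
\end{enumerate}
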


Proposition~\ref{prop:corr} is stated for the correlation matrix. For
the centered but unstandardized covariance $\Sigma$ used in
Eq.~\eqref{eq:erank}, ERank measures two things at once: decorrelation
between channels \emph{and} balance of variance across channels (a
diagonal $\Sigma$ with unequal variances also has reduced ERank). We
regard both as components of richness --- an image exciting many
channels weakly and one strongly is genuinely less rich than one
exciting all channels equally --- but the distinction should be kept in
mind when interpreting scores.

\subsection{Behavior under Noise}
\label{sec:noise}

We add noise to the feature tensor and track its effect on the channel
covariance. Let the corrupted features be $\tilde X = X + E$, where
$E \in \mathbb{R}^{N\times C}$ is zero-mean, independent of $X$, and
isotropic across channels with per-channel variance $\sigma^2$, i.e.\
$\tfrac1N\,\mathbb{E}[\bar E^\top \bar E] = \sigma^2 I_C$ (here $\bar E$
is $E$ with each channel centered).

\begin{lemma}[Tensor noise acts additively on the spectrum]
\label{lem:noise-reduction}
Under the assumptions above, the expected corrupted covariance is
\begin{equation}
  \label{eq:noise-cov}
  \mathbb{E}\big[\tilde\Sigma\big]
  \;=\; \Sigma + \sigma^2 I_C ,
  \qquad
  \tilde\Sigma := \tfrac1N \bar{\tilde X}^\top \bar{\tilde X},
\end{equation}
and the empirical $\tilde\Sigma$ concentrates on this value with an
$O(N^{-1/2})$ error. Consequently each eigenvalue of
$\mathbb{E}[\tilde\Sigma]$ is $\lambda_i + \sigma^2$.
\end{lemma}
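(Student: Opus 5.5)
We expand the corrupted covariance bilinearly. Centering is a linear operation on columns, so $\bar{\tilde X} = \bar X + \bar E$, and therefore
\begin{equation*}
  \tilde\Sigma \;=\; \Sigma \;+\; \tfrac1N\big(\bar X^\top \bar E + \bar E^\top \bar X\big) \;+\; \tfrac1N \bar E^\top \bar E .
\end{equation*}
We then take expectations term by term, treating $X$ (hence $\bar X$ and $\Sigma$) as fixed.

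\emph{First term.} $\Sigma$ is unaffected by the expectation.

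\emph{Cross terms.} Since $E$ is zero-mean and independent of $X$, we have $\mathbb{E}[\bar E] = 0$, because centering preserves zero mean. Hence $\mathbb{E}[\bar X^\top \bar E] = \bar X^\top \mathbb{E}[\bar E] = 0$, and similarly for the transpose.

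\emph{Noise term.} By the stated assumption, $\tfrac1N\mathbb{E}[\bar E^\top \bar E] = \sigma^2 I_C$.

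Summing the three contributions gives Eq.~\eqref{eq:noise-cov}. The eigenvalue statement is then immediate. Adding $\sigma^2 I_C$ commutes with every matrix, so the orthonormal eigenbasis of $\Sigma$ also diagonalizes $\Sigma + \sigma^2 I_C$. In that basis each eigenvalue is shifted to $\lambda_i + \sigma^2$, and the ordering is preserved.

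The concentration claim is the only step needing care, and it is where we expect the main obstacle. The plan is to bound the deviation $\tilde\Sigma - \mathbb{E}[\tilde\Sigma]$ entrywise, or in Frobenius norm, for fixed $C$. To make this work we add a mild assumption, which the paper implicitly uses: the rows of $E$ are independent across spatial positions, with bounded fourth moments, and the features $\bar X$ have bounded entries or bounded second moments.

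The bound then splits over the two random terms.
\begin{itemize}
  \item \emph{Cross-term entries.} Each entry $\tfrac1N\sum_n \bar X_{na}\bar E_{nb}$ is an average of $N$ independent zero-mean terms. Its variance is $\tfrac{\sigma^2}{N}\Sigma_{aa}$, up to a small centering correction, which is $O(1/N)$.
  \item \emph{Noise-term entries.} Each entry of $\tfrac1N \bar E^\top \bar E - \sigma^2 I_C$ is an average of $N$ independent terms with finite variance, again with variance $O(1/N)$. The effect of column centering, namely subtracting the empirical mean, contributes an $O(1/N)$ bias. This bias can be absorbed, or removed by a factor $\tfrac{N}{N-1}$.
\end{itemize}
Chebyshev's inequality, applied entrywise and combined with a union bound over the $C^2$ entries, then yields $\|\tilde\Sigma - (\Sigma+\sigma^2 I_C)\|_F = O_P(N^{-1/2})$ for fixed $C$.

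Finally, Weyl's inequality shows that the empirical eigenvalues are within $O_P(N^{-1/2})$ of $\lambda_i + \sigma^2$. This justifies reading the spectral shift off the expected covariance.
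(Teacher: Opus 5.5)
Your proposal is correct and follows the paper's proof: the same bilinear expansion of $\tilde\Sigma$, the same term-by-term expectations, and the same simultaneous-diagonalization argument for the eigenvalue shift. Your treatment of the $O(N^{-1/2})$ concentration is more careful than the paper's one-line remark, since it states explicitly the independence across spatial positions and the finite fourth moments that the paper uses only implicitly, and your Weyl step carries the shift over to the empirical eigenvalues.
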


\begin{proof}
Expanding $\bar{\tilde X} = \bar X + \bar E$,
\begin{equation*}
  \tilde\Sigma
  = \tfrac1N \bar X^\top \bar X
  + \tfrac1N\big(\bar X^\top \bar E + \bar E^\top \bar X\big)
  + \tfrac1N \bar E^\top \bar E .
\end{equation*}
The first term is $\Sigma$. The cross term has zero expectation, since
$E$ is zero-mean and independent of $X$:
$\mathbb{E}[\bar X^\top \bar E] = \bar X^\top \mathbb{E}[\bar E] = 0$.
The last term has expectation $\sigma^2 I_C$ by the isotropy
assumption. This gives Eq.~\eqref{eq:noise-cov}. Both stochastic terms
are averages of $N$ independent contributions, so they fluctuate around
their means at rate $O(N^{-1/2})$; for $N = HW$ in the thousands the
deviation is negligible. Adding $\sigma^2 I_C$ shifts every eigenvalue
by $\sigma^2$, since $\Sigma$ and $\sigma^2 I_C$ are simultaneously
diagonalizable.
\end{proof}

\begin{proposition}[Monotonicity under decorrelated noise]
\label{prop:noise}
Let $\Sigma_\sigma := \Sigma + \sigma^2 I_C$ be the shifted covariance
of Lemma~\ref{lem:noise-reduction}. Then
$\operatorname{erank}(\Sigma_\sigma)$ is nondecreasing in $\sigma^2$,
strictly increasing unless the spectrum of $\Sigma$ is already flat,
and $\operatorname{erank}(\Sigma_\sigma) \to C$ as $\sigma^2 \to \infty$.
\end{proposition}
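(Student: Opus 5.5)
By Lemma~\ref{lem:noise-reduction}, the eigenvalues of $\Sigma_\sigma$ are $\lambda_i + s$ with $s := \sigma^2 \ge 0$. I will therefore study the normalized spectrum
\[
p_i(s) = \frac{\lambda_i + s}{T + Cs}, \qquad T := \sum_j \lambda_j ,
\]
and show that the Shannon entropy $H(s) = -\sum_i p_i(s)\log p_i(s)$ is nondecreasing in $s$. This suffices because $\operatorname{erank} = \exp H$ and $\exp$ is strictly increasing. I assume $T>0$, so that $p$ is defined at $s=0$; for $s>0$ this is automatic.

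\textbf{Route 1: majorization.} Observe that
\[
p(s) = (1-t)\,p(0) + t\,u, \qquad t = \frac{Cs}{T+Cs},\quad u = (1/C,\dots,1/C).
\]
So $p(s)$ is a convex mixture of the original spectrum with the flat distribution, and $t$ increases strictly from $0$ to $1$ as $s$ goes from $0$ to $\infty$.

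For $t_1 < t_2$, the point $p(t_2)$ lies on the segment between $p(t_1)$ and $u$:
\[
p(t_2) = (1-\mu)\,p(t_1) + \mu\, u \quad \text{with } \mu \in (0,1].
\]
The map $q \mapsto (1-\mu)q + \mu u$ averages $q$ with its symmetrization, so it is given by a doubly stochastic matrix. Hence $p(t_1)$ majorizes $p(t_2)$. Schur-concavity of the Shannon entropy (property (iv)) then gives $H(p(t_2)) \ge H(p(t_1))$, which is monotonicity.

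\textbf{Route 2: direct differentiation (backup).} Using $\sum_i p_i' = 0$,
\[
\frac{dH}{ds} = -\sum_i p_i' \log p_i, \qquad p_i' = \frac{T - C\lambda_i}{(T+Cs)^2}.
\]
Since $p_i$ is increasing in $\lambda_i$, the terms with $p_i' > 0$ (small $\lambda_i$) carry small $\log p_i$. One can check that
\[
\frac{dH}{ds} = -\frac{1}{(T+Cs)^2} \sum_i (T - C\lambda_i)(\log p_i - c)
\]
for the constant $c = \tfrac1C\sum_i \log p_i$; subtracting $c$ is harmless because $\sum_i (T - C\lambda_i) = 0$. Each summand is nonpositive, since $T - C\lambda_i$ and $\log p_i - c$ vary oppositely with $\lambda_i$. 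This is a Chebyshev-sum argument, and it gives $dH/ds \ge 0$ directly.

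\textbf{Strictness.} This is the step needing care. In Route 2, equality $dH/ds = 0$ requires every summand to vanish. Because $\lambda_i \mapsto \log p_i$ is strictly increasing, this forces $T = C\lambda_i$ for all $i$, i.e.\ a flat spectrum. Conversely, if the spectrum is flat then $p(s) = u$ for all $s$, so $H$ is constant. The derivative argument thus yields strict increase whenever the spectrum is not flat; I will prefer it over Route 1, which only gives the weak inequality unless one invokes strict Schur-concavity.

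\textbf{Limit.} As $s \to \infty$, $t \to 1$, so $p(s) \to u$. By continuity of $H$ (property (iii)), $H(s) \to \log C$, and hence $\operatorname{erank}(\Sigma_\sigma) \to C$.
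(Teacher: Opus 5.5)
Your Route~1 is essentially the paper's proof. The paper likewise argues that shifting every eigenvalue by $\sigma^2$ and renormalizing makes $p(\sigma_2)$ majorized by $p(\sigma_1)$, then invokes Schur-concavity. Your identity $p(s)=(1-t)\,p(0)+t\,u$, with the doubly stochastic map $(1-\mu)I+\tfrac{\mu}{C}J$ ($J$ the all-ones matrix), is a cleaner justification of the paper's ``moves mass from larger to smaller entries.''

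The gap is in Route~2, which is the route you rely on for strictness. With $c=\tfrac1C\sum_i\log p_i$, the claim that every summand $(T-C\lambda_i)(\log p_i-c)$ is nonpositive is false. The first factor changes sign at $p_i=1/C$, the arithmetic mean of $p$. The second changes sign at $p_i=e^{c}$, the geometric mean, which is strictly smaller for a non-flat spectrum. So any index with $e^{c}<p_i<1/C$ gives a strictly positive summand. Concretely, take $\lambda=(2,\,0.9,\,0.1)$ at $s=0$: then $T=C=3$, $c\approx-1.670$, and the middle summand is $0.3\cdot(\log 0.3-c)\approx 0.3\cdot 0.466>0$. The total is still nonpositive by Chebyshev's sum inequality, so monotonicity survives. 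But your strictness step (``$dH/ds=0$ requires every summand to vanish'') rests on the false pointwise claim.

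The repair is to take $c=\log(1/C)$ instead, which is allowed because $\sum_i(T-C\lambda_i)=0$. Then $\log p_i-c\le 0\iff p_i\le 1/C\iff \lambda_i\le T/C\iff T-C\lambda_i\ge 0$. So every summand is genuinely $\le 0$ and vanishes iff $\lambda_i=T/C$, which gives $dH/ds>0$ unless the spectrum is flat. Equivalently, use the pairwise form $\sum_{i,j}(a_i-a_j)(b_i-b_j)$ of Chebyshev's inequality: its terms are individually signed, and equality forces $\lambda_i=\lambda_j$ for all $i,j$.

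Do the calculus on $s>0$, where all $p_i>0$ so $\log p_i$ is finite even if $\Sigma$ has zero eigenvalues, and extend to $s=0$ by continuity of $H$.

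Alternatively, Route~1 already gives strictness with no calculus. By concavity of $H$ along the segment, $H(p(t_2))\ge(1-\mu)H(p(t_1))+\mu\log C$. This strictly exceeds $H(p(t_1))$ because $\mu>0$ and $H(p(t_1))<\log C$ unless $p(t_1)=u$, i.e.\ unless the spectrum of $\Sigma$ is flat. This is also the cleanest way to read the paper's ``strict unless $p$ is already uniform.'' Your limit argument is fine.
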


\begin{proof}
By Lemma~\ref{lem:noise-reduction} the normalized spectrum is
$p_i(\sigma) = (\lambda_i + \sigma^2)/(\operatorname{tr}\Sigma + C\sigma^2)$.
For $\sigma_2^2 > \sigma_1^2$, adding a constant to every eigenvalue
and renormalizing moves mass from larger to smaller entries, so
$p(\sigma_2)$ is majorized by $p(\sigma_1)$; Schur-concavity
(Sec.~\ref{sec:family}) then gives monotonicity, strict unless $p$ is
already uniform. As $\sigma^2 \to \infty$, $p_i(\sigma) \to 1/C$ for
all $i$, whence $\operatorname{erank}(\Sigma_\sigma) \to C$.
\end{proof}

\end{document}